\DeclareMathOperator{\R}{\mathbb{R}}
\DeclareMathOperator{\Z}{\mathbb{Z}}
\DeclareMathOperator{\N}{\mathbb{N}}
\DeclareMathOperator{\Proj}{\mathbb{P}}
\DeclareMathOperator{\Lplus}{\Lambda_+}
\DeclareMathOperator{\dash}{\;\cdot\;}
\newenvironment{proofsketch}[1]{\par\noindent\textit{Proof sketch.}\space#1}{\hfill $\square$}
\newenvironment{prooflem}[1]{\par\noindent\textit{Proof.}\space#1}{\hfill $\blacksquare$}
\begin{document}
\title{The Gaussian kernel on the circle and spaces that admit isometric embeddings of the circle}
\titlerunning{The Gaussian kernel on the circle}
% If the paper title is too long for the running head, you can set
% an abbreviated paper title here
%
\author{Nathaël Da Costa \and
Cyrus Mostajeran \and
Juan-Pablo Ortega}
\authorrunning{N. Da Costa et al.}
% First names are abbreviated in the running head.
% If there are more than two authors, 'et al.' is used.
%
\institute{Division of Mathematical Sciences, Nanyang Technological University, Singapore 637371}
\maketitle              % typeset the header of the contribution
\begin{abstract}
On Euclidean spaces, the Gaussian kernel is one of the most widely used kernels in applications. It has also been used on non-Euclidean spaces, where it is known that there may be (and often are) scale parameters for which it is not positive definite. Hope remains that this kernel is positive definite for many choices of parameter. However, we show that the Gaussian kernel is not positive definite on the circle for any choice of parameter. This implies that on metric spaces in which the circle can be isometrically embedded, such as spheres, projective spaces and Grassmannians, the Gaussian kernel is not positive definite for any parameter.
\keywords{kernel methods \and Gaussian kernel \and  positive definite kernels \and geodesic exponential kernel \and metric spaces \and Riemannian manifolds}
\end{abstract}
\section{Introduction}
In many applications, it is useful to capture the geometry of the data and view it as lying in a non-Euclidean space, such as a metric space or a  Riemannian manifold. Examples of such applications include computer vision \cite{romeny_geometry-driven_2013}, robot learning \cite{calinon_gaussians_2020} and brain-computer interfaces \cite{barachant_riemannian_2010}. We are interested in the problem of applying kernel methods on such non-Euclidean spaces. \par
Kernel methods are prominent in machine learning, with some examples of algorithms including support vector machines \cite{cristianini_support_2008}, kernel principal component analysis \cite{scholkopf_nonlinear_1998}, solvers for controlled and stochastic differential equations \cite{salvi2021signature}, and reservoir computing \cite{rc16}, \cite{rc25}. These algorithms rely on the existence of a reproducing kernel Hilbert space into which the kernel maps the data. This in turn requires the chosen kernel to be positive definite (PD). \par
One of the most common types of kernel used in applications is the Gaussian kernel. Defined on a Euclidean space, this kernel is PD for any choice of parameter. \cite{schoenberg_metric_1938} shows moreover that the Gaussian kernel defined on a metric space is PD for all parameters if and only if the metric space can be isometrically embedded into an inner product space. This implies that Euclidean spaces are the only complete Riemannian manifolds for which the Gaussian kernel is PD for all parameters \cite{feragen_geodesic_2014}, \cite{jayasumana_kernel_2015}.
However, the problem of determining for which parameters the Gaussian kernel is PD on a given metric space is not solved. \cite{sra_positive_2013} shows that the Gaussian kernel may be PD for a wide range of parameters even when it is not PD for every parameter. However, this paper rules out such a possibility for a large class of spaces of interest. \par
We start by defining positive definite kernels. Then we give a brief review of the literature on the positive definiteness of the Gaussian kernel, and introduce some new notation to study this problem. Finally, we show that the Gaussian kernel is not PD for any choice of parameter on the circle, and consequently for any metric space admitting an isometrically embedded circle. \par
We should note that since producing the results of this paper, we have discovered that certain general characterisations of positive definite functions on the circle exist in the literature, which encompass our result on the circle \cite{wood_when_1995}, \cite{gneiting_strictly_2013}. Our proof, however, is specific to the Gaussian kernel, relies only on elementary analysis, and provides further insight into the extent to which the Gaussian kernel fails to be PD, which may have practical relevance for applications of kernel methods to non-Euclidean data processing.
\section{Kernels}
\begin{definition}
A kernel on a set $X$ is a symmetric map
$$k:X\times X \rightarrow \R.$$
$k$ is said to be positive definite (PD) if for all $N\in\N$, $x_1,\dots,x_N\in X$ and all $c_1,\dots,c_N\in\R$,
$$\sum_{i=1}^{N}\sum_{j=1}^{N}c_ic_jk(x_i,x_j)\geq 0$$
i.e. the matrix $\big(k(x_i,x_j)\big)_{i,j}$, which we call the Gram matrix of $x_1,\dots,x_N$, is positive semi-definite.
\end{definition}
\begin{proposition}\label{cvexcone}
Suppose the $(k_n)_{n\geq 1}$ are PD kernels on $X$.
\begin{enumerate}[(i)]
    \item $a_1k_1+a_2k_2$ is a PD kernel on $X$ for all $a_1,a_2\geq0$.
    \item The Hadamard (pointwise) product $k_1\cdot k_2$ is a PD kernel on $X$.
    \item If $k_n\rightarrow k$ pointwise as $n\rightarrow\infty$, then $k$ is a PD kernel on $X$.
    \item If $Y\subset X$ then $k_1|_Y$ is a PD kernel on $Y$.
\end{enumerate}
\end{proposition}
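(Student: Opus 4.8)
The plan is to verify each of the four properties directly from the definition, reducing everything to elementary statements about the associated Gram matrices. For (i), I would fix $N\in\N$, points $x_1,\dots,x_N\in X$ and scalars $c_1,\dots,c_N\in\R$, and simply expand
$$\sum_{i,j}c_ic_j\big(a_1k_1+a_2k_2\big)(x_i,x_j)=a_1\sum_{i,j}c_ic_jk_1(x_i,x_j)+a_2\sum_{i,j}c_ic_jk_2(x_i,x_j),$$
which is a nonnegative combination of two nonnegative quantities; symmetry of $a_1k_1+a_2k_2$ is immediate. Property (iv) is even more direct: if $Y\subseteq X$, then any finite tuple of points of $Y$ is in particular a finite tuple of points of $X$, so the defining inequality for $k_1$ is inherited verbatim by $k_1|_Y$.

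For (iii), I would again fix a finite configuration $x_1,\dots,x_N$ and scalars $c_1,\dots,c_N$. The quantity $\sum_{i,j}c_ic_j\kappa(x_i,x_j)$ depends only on the finitely many values $\kappa(x_i,x_j)$, so pointwise convergence $k_n\to k$ forces $\sum_{i,j}c_ic_jk_n(x_i,x_j)\to\sum_{i,j}c_ic_jk(x_i,x_j)$; since every term of this convergent sequence is $\geq 0$, so is its limit. Symmetry of $k$ follows by passing to the limit in the symmetry of each $k_n$.

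The substantive step is (ii), which is the Schur product theorem: the Hadamard product of two positive semi-definite matrices is positive semi-definite. The argument I would give: fix $x_1,\dots,x_N$, set $A=\big(k_1(x_i,x_j)\big)_{i,j}$ and $B=\big(k_2(x_i,x_j)\big)_{i,j}$, both real symmetric and PSD, so that the Gram matrix of $x_1,\dots,x_N$ for $k_1\cdot k_2$ is the Hadamard product $A\circ B$. Diagonalise $B=\sum_{\ell=1}^{N}\lambda_\ell\, v_\ell v_\ell^{\top}$ with $\lambda_\ell\geq 0$ and $v_\ell=(v_{\ell,1},\dots,v_{\ell,N})^{\top}\in\R^N$. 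Then for any $c=(c_1,\dots,c_N)\in\R^N$,
$$\sum_{i,j}c_ic_j(A\circ B)_{ij}=\sum_{\ell=1}^{N}\lambda_\ell\sum_{i,j}c_ic_j\,A_{ij}\,v_{\ell,i}v_{\ell,j}=\sum_{\ell=1}^{N}\lambda_\ell\sum_{i,j}(c_iv_{\ell,i})(c_jv_{\ell,j})\,A_{ij}\geq 0,$$
since each inner double sum is the quadratic form of the PSD matrix $A$ evaluated at the vector with entries $c_iv_{\ell,i}$, and $\lambda_\ell\geq 0$. Symmetry of $k_1\cdot k_2$ is clear. I expect this diagonalisation to be the only place in the proposition requiring more than a one-line manipulation, and it is really the crux; the rest is routine bookkeeping with finite sums.
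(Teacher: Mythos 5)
Your proof is correct. Parts (i), (iii) and (iv) coincide in substance with the paper's argument: the paper phrases (i) and (iii) as the statement that the symmetric positive semidefinite matrices form a closed convex cone, which is exactly the quadratic-form computation and limit argument you carry out explicitly, and (iv) is dismissed as trivial in both treatments. The one genuine difference is (ii): the paper outsources the key fact --- that positive semidefinite matrices are closed under the Hadamard product --- to a citation (Berg, Christensen and Ressel), whereas you supply the standard proof of the Schur product theorem via the spectral decomposition $B=\sum_\ell \lambda_\ell v_\ell v_\ell^{\top}$, correctly identifying each inner double sum as the quadratic form of $A$ evaluated at the vector with entries $c_i v_{\ell,i}$. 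Your version buys self-containedness at the cost of a few extra lines; the paper's buys brevity. Both are sound, and your identification of (ii) as the only non-routine step matches the paper's own emphasis.
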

\begin{proof}
The $N\times N$ symmetric positive semidefinite matrices $\text{Sym}^{0+}(N)$ form a closed convex cone in the space of symmetric matrices $\text{Sym}(N)$, which implies (i) and (iii). $\text{Sym}^{0+}(N)$ is also closed under pointwise multiplication, as shown in \cite[Chapter 3 Theorem 1.12.]{berg_harmonic_1984}, which implies (ii). Finally, proving (iv) is trivial.
\end{proof}

Proposition \ref{cvexcone} (i), (ii), and (iii) say that PD kernels on $X$ form a convex cone, closed under pointwise convergence and pointwise multiplication.
\section{The Gaussian kernel}\label{s2}
In this section, $X$ is a metric space equipped with the metric $d$. A common type of kernel on such a space is the Gaussian kernel
\begin{equation}\label{GK}
k(\dash,\dash) := \exp(-\lambda d(\dash,\dash)^2)
\end{equation}
where $\lambda>0$. Write
$$\Lplus(X) := \{\lambda>0: \text{the Gaussian kernel with parameter $\lambda$ is PD} \}.$$
We would like to characterise $\Lplus(X)$ in terms of $X$. In what follows, Propositions \ref{cvexconeG} and \ref{subsetsG} are the analogous to Proposition \ref{cvexcone} for Gaussian kernels.
\begin{proposition}\label{cvexconeG}

(i) $\Lplus(X)$ is closed under addition. \\
(ii) $\Lplus(X)$ is topologically closed in $(0,\infty)$.

\end{proposition}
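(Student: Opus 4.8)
The plan is to derive both parts directly from Proposition \ref{cvexcone}, since $\Lplus(X)$ is built out of Gaussian kernels and Proposition \ref{cvexcone} tells us how PD-ness behaves under the operations we need.

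For part (i), suppose $\lambda_1,\lambda_2\in\Lplus(X)$, so the Gaussian kernels $k_1=\exp(-\lambda_1 d(\dash,\dash)^2)$ and $k_2=\exp(-\lambda_2 d(\dash,\dash)^2)$ are both PD on $X$. The key observation is that the Gaussian kernel with parameter $\lambda_1+\lambda_2$ is precisely the Hadamard product $k_1\cdot k_2$, because $\exp(-(\lambda_1+\lambda_2)d^2)=\exp(-\lambda_1 d^2)\exp(-\lambda_2 d^2)$ pointwise on $X\times X$. By Proposition \ref{cvexcone}(ii), this product is PD, hence $\lambda_1+\lambda_2\in\Lplus(X)$.

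For part (ii), let $(\lambda_n)_{n\geq1}$ be a sequence in $\Lplus(X)$ converging to some $\lambda\in(0,\infty)$. Each $k_n:=\exp(-\lambda_n d(\dash,\dash)^2)$ is PD by hypothesis. Since $\lambda_n\to\lambda$ and the exponential is continuous, $k_n(x,y)=\exp(-\lambda_n d(x,y)^2)\to\exp(-\lambda d(x,y)^2)=:k(x,y)$ for every fixed pair $(x,y)\in X\times X$; that is, $k_n\to k$ pointwise. By Proposition \ref{cvexcone}(iii), the pointwise limit $k$ is PD, so $\lambda\in\Lplus(X)$ and $\Lplus(X)$ is closed in $(0,\infty)$. (Note we only claim closedness in the relative topology of $(0,\infty)$: the limit point $0$ is excluded because the Gaussian kernel is only defined for $\lambda>0$.)

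There is no real obstacle here — both statements are immediate corollaries of Proposition \ref{cvexcone} once one notices the multiplicativity $\exp(-(\lambda_1+\lambda_2)d^2)=\exp(-\lambda_1 d^2)\cdot\exp(-\lambda_2 d^2)$ and the continuity of $\lambda\mapsto\exp(-\lambda t)$. The only point requiring the slightest care is being precise that part (ii) asserts closedness in $(0,\infty)$ rather than in $[0,\infty)$, since $0\notin\Lplus(X)$ by the convention $\lambda>0$ in \eqref{GK}.
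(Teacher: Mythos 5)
Your proof is correct and follows exactly the route the paper takes: part (i) from Proposition \ref{cvexcone}(ii) via the identity $\exp(-(\lambda_1+\lambda_2)d^2)=\exp(-\lambda_1 d^2)\cdot\exp(-\lambda_2 d^2)$, and part (ii) from Proposition \ref{cvexcone}(iii) via pointwise convergence. You have merely spelled out the details that the paper leaves implicit.
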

\begin{proof}
(i) and (ii) follow from Proposition \ref{cvexcone} (ii) and (iii) respectively.
\end{proof}
\begin{corollary}\label{corcharG}
(i) If there is $\epsilon>0$ s.t. $(0,\epsilon)\subset\Lplus(X)$ then $\Lplus(X)=(0,\infty).$ \\
(ii) If there is $\epsilon>0$ s.t. $\Lplus(X)\subset(0,\epsilon)$ then $\Lplus(X)=\varnothing$.
\end{corollary}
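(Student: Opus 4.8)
The plan is to derive both statements directly from the fact that $\Lplus(X)$ is closed under addition (Proposition \ref{cvexconeG}(i)); the topological closedness recorded in part (ii) of that proposition is not actually needed for this corollary. The key structural observation is that, by the Archimedean property of $\R$, a subset of $(0,\infty)$ that is closed under finite sums can only be empty or unbounded, and moreover any interval of the form $(0,\epsilon)$ ``generates'' all of $(0,\infty)$ under finite sums.

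For part (i), I would take an arbitrary $\lambda>0$ and pick $n\in\N$ large enough that $\lambda/n<\epsilon$. Then $\lambda/n\in(0,\epsilon)\subset\Lplus(X)$, and writing $\lambda=\sum_{i=1}^{n}\lambda/n$ as a finite sum of elements of $\Lplus(X)$, closure under addition yields $\lambda\in\Lplus(X)$. Since $\lambda>0$ was arbitrary, $\Lplus(X)=(0,\infty)$.

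For part (ii), I would argue by contradiction. Suppose $\Lplus(X)\subset(0,\epsilon)$ but $\Lplus(X)\neq\varnothing$, and fix some $\lambda\in\Lplus(X)$. Closure under addition gives $n\lambda\in\Lplus(X)$ for every $n\in\N$, hence $n\lambda<\epsilon$ for every $n$, which contradicts the Archimedean property of $\R$. Therefore $\Lplus(X)=\varnothing$.

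I do not expect any genuine obstacle here: once one notices that only additive closure is being used, both parts are immediate, and the only point requiring a little care is to invoke the Archimedean property of the reals explicitly rather than appealing to the (stronger, and here unnecessary) topological closedness. One could alternatively present part (i) as the contrapositive of a boundedness dichotomy, but the direct argument above seems cleanest.
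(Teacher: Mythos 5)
Your proposal is correct and follows exactly the paper's route: the paper's proof is the one-line remark that both parts follow from Proposition \ref{cvexconeG} (i) (closure under addition), and your argument is simply the explicit unpacking of that remark via the Archimedean property. Your observation that the topological closedness in Proposition \ref{cvexconeG} (ii) is not needed here is also consistent with the paper.
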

\begin{proof}
These both follow from Proposition \ref{cvexconeG} (i).
\end{proof}
\begin{definition}\label{embedding}
Let $Y$ be another metric space with metric $d'$. We say $Y$ isometrically embeds into $X$, written $Y\hookrightarrow X$ if there is a function $\iota: Y\rightarrow X$ such that
$$d(\iota(\dash),\iota(\dash))=d'(\dash,\dash).$$
\end{definition}
Note that, while the notion of `isometry' in the context of Riemannian manifolds and in the context of metric spaces correspond (Myers–Steenrod theorem), the notion of `isometric embedding' is stronger in the context of metric spaces than in the context of Riemannian manifolds. For example, the unit 2-sphere $S^2$ can be isometrically embedded in $\R^3$ in the sense of Riemannian manifolds, but not in our sense.
\begin{proposition}\label{subsetsG}
Let $Y$ be another metric space with metric $d'$. If $Y\hookrightarrow X$, then $\Lplus(X)\subset\Lplus(Y)$.
\end{proposition}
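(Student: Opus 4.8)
The plan is to observe that the Gaussian kernel on $Y$ is simply the pullback of the Gaussian kernel on $X$ along the isometric embedding $\iota\colon Y\to X$ given by Definition \ref{embedding}, and that positive definiteness is preserved under such pullbacks. So fix $\lambda\in\Lplus(X)$; I want to show $\lambda\in\Lplus(Y)$. Write $k_X$ and $k_Y$ for the Gaussian kernels with parameter $\lambda$ on $X$ and $Y$ respectively.

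First I would record the key identity: for all $y,y'\in Y$, the defining property $d(\iota(y),\iota(y'))=d'(y,y')$ gives
$$k_Y(y,y') = \exp(-\lambda\, d'(y,y')^2) = \exp(-\lambda\, d(\iota(y),\iota(y'))^2) = k_X(\iota(y),\iota(y')).$$
Then, given arbitrary $N\in\N$, points $y_1,\dots,y_N\in Y$ and scalars $c_1,\dots,c_N\in\R$, I would substitute this identity into the quadratic form defining positive definiteness, obtaining
$$\sum_{i=1}^{N}\sum_{j=1}^{N} c_i c_j\, k_Y(y_i,y_j) = \sum_{i=1}^{N}\sum_{j=1}^{N} c_i c_j\, k_X(\iota(y_i),\iota(y_j)) \geq 0,$$
where the inequality holds because $\iota(y_1),\dots,\iota(y_N)$ are points of $X$ and $\lambda\in\Lplus(X)$. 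Hence every Gram matrix of $k_Y$ is positive semidefinite, i.e. $\lambda\in\Lplus(Y)$, which is exactly the inclusion $\Lplus(X)\subset\Lplus(Y)$.

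There is essentially no obstacle here: the argument is a direct unwinding of the definitions. The only point to be mildly careful about is that $\iota$ is not assumed injective, but this causes no difficulty, since the definition of positive definiteness places no distinctness requirement on the points $x_1,\dots,x_N$ (and in any case the metric axioms force $\iota$ to be injective, as $d(\iota(y),\iota(y'))=0$ implies $d'(y,y')=0$). Alternatively, one could phrase the same computation through Proposition \ref{cvexcone} (iv) by identifying $Y$ with the subset $\iota(Y)\subset X$ and noting that $k_Y$ is then the restriction of $k_X$.
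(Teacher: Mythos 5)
Your proof is correct and is essentially the paper's argument made explicit: the paper simply cites Proposition \ref{cvexcone} (iv) (restriction of a PD kernel to a subset), which amounts to identifying $Y$ with $\iota(Y)\subset X$ exactly as you note in your closing remark. Your direct computation with the quadratic form is just the unwound version of that one-line citation.
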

\begin{proof}
Follows from Proposition \ref{cvexcone} (iv).
\end{proof}

As of now, we have only made rather elementary observations about $\Lplus(X)$, but now we state the first major result, without proof.
\begin{theorem}[due to I.J. Schoenberg \cite{schoenberg_metric_1938}]\label{mainthm}
The following are equivalent:
\begin{enumerate}
    \item $\Lplus(X) = (0,\infty)$.
    \item $X\hookrightarrow \mathcal{V}$ for some inner product space $\mathcal{V}$.
\end{enumerate}
\end{theorem}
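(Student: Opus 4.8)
The plan is to route the equivalence through the auxiliary statement

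\smallskip
\noindent$(\star)$\quad $d^2$ is \emph{conditionally negative definite} on $X$, i.e. $\sum_{i,j}c_ic_j\,d(x_i,x_j)^2\le 0$ whenever $\sum_i c_i = 0$,
\smallskip

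\noindent and to prove the cycle $1\Rightarrow(\star)\Rightarrow 2\Rightarrow 1$. For $2\Rightarrow 1$ I would first check that the Gaussian kernel on an inner product space $\mathcal V$ is PD for every $\lambda>0$. Writing $\exp(-\lambda\|u-v\|^2) = f(u)\,\exp(2\lambda\langle u,v\rangle)\,f(v)$ with $f(u)=\exp(-\lambda\|u\|^2)$, the kernel $(u,v)\mapsto\langle u,v\rangle$ is PD, so each Hadamard power $\langle u,v\rangle^n$ is PD by Proposition \ref{cvexcone}(ii), hence $\exp(2\lambda\langle u,v\rangle)=\sum_{n\ge 0}\frac{(2\lambda)^n}{n!}\langle u,v\rangle^n$ is PD by Proposition \ref{cvexcone}(i),(iii), and multiplying by the rank-one PD kernel $(u,v)\mapsto f(u)f(v)$ preserves this. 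If $\iota:X\to\mathcal V$ is an isometric embedding then the Gaussian kernel on $X$ is the restriction along $\iota$ of the Gaussian kernel on $\mathcal V$, so it is PD by Proposition \ref{cvexcone}(iv); as $\lambda>0$ was arbitrary, $\Lplus(X)=(0,\infty)$.

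For $1\Rightarrow(\star)$, fix $x_1,\dots,x_N$ and coefficients with $\sum_i c_i = 0$. Since $\sum_{i,j}c_ic_j = \big(\sum_i c_i\big)^2 = 0$, the first-order expansion of the exponential gives $\sum_{i,j}c_ic_j\exp(-\lambda d(x_i,x_j)^2) = -\lambda\sum_{i,j}c_ic_j\,d(x_i,x_j)^2 + o(\lambda)$ as $\lambda\to 0^+$ (only finitely many terms, so the expansion and the limit-sum interchange are unproblematic). The left-hand side is $\ge 0$ for all $\lambda>0$, so dividing by $\lambda$ and letting $\lambda\to 0^+$ yields $\sum_{i,j}c_ic_j\,d(x_i,x_j)^2\le 0$, which is $(\star)$.

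For $(\star)\Rightarrow 2$ I would use the base-point trick. Fix $x_0\in X$ and set $\kappa(x,y):=\tfrac12\big(d(x,x_0)^2+d(y,x_0)^2-d(x,y)^2\big)$. The crux is that $\kappa$ is a PD kernel: given $x_1,\dots,x_N$ and $c_1,\dots,c_N$, put $c_0:=-\sum_{i\ge 1}c_i$ so that $\sum_{i\ge 0}c_i=0$, apply $(\star)$ to the configuration $x_0,x_1,\dots,x_N$, and expand the double sum; the terms rearrange to the clean identity $\sum_{i,j\ge 1}c_ic_j\kappa(x_i,x_j) = -\tfrac12\sum_{i,j\ge 0}c_ic_j\,d(x_i,x_j)^2\ge 0$. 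Given positive definiteness of $\kappa$, I would realize it as a Gram kernel in the standard way: form the real vector space of finitely supported formal sums $\sum_i c_i\delta_{x_i}$, equip it with the positive semidefinite bilinear form $\big\langle\sum_i c_i\delta_{x_i},\sum_j c_j'\delta_{y_j}\big\rangle:=\sum_{i,j}c_ic_j'\kappa(x_i,y_j)$, and quotient by its null space to obtain an inner product space $\mathcal V$ together with $\phi:X\to\mathcal V$, $\phi(x):=[\delta_x]$, satisfying $\langle\phi(x),\phi(y)\rangle=\kappa(x,y)$. Then $\|\phi(x)-\phi(y)\|^2=\kappa(x,x)-2\kappa(x,y)+\kappa(y,y)=d(x,y)^2$, so $\phi$ is an isometric embedding $X\hookrightarrow\mathcal V$ in the sense of Definition \ref{embedding} (one may complete $\mathcal V$ to a Hilbert space, but this is not needed).

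I expect the main obstacle to be the implication $(\star)\Rightarrow 2$: the base-point identity showing $\kappa$ is PD is a slightly involved but purely algebraic bookkeeping of the double sum, and the passage from a PD kernel to a concrete inner product realization is the standard Moore--Aronszajn type construction, which must be carried out honestly since the bilinear form becomes an inner product only after quotienting by its null space. The other two implications are short; the one point worth flagging is that ``isometric embedding'' is meant with respect to $d$ itself (not $d^2$), which is exactly what makes the factor $\tfrac12$ in $\kappa$ and the identity $\|u-v\|^2=\|u\|^2-2\langle u,v\rangle+\|v\|^2$ fit together.
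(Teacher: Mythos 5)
The paper deliberately states this theorem \emph{without proof}, attributing it to Schoenberg, so there is no in-paper argument to compare against. Your proposal is a correct and essentially complete proof, and it follows the classical Schoenberg route: pivot through conditional negative definiteness of $d^2$, with $1\Rightarrow(\star)$ by a first-order expansion of the exponential at $\lambda=0^+$ (valid since the sum is finite and $\sum_{i,j}c_ic_j=0$ kills the constant term), $(\star)\Rightarrow 2$ by the base-point kernel $\kappa$ and the Moore--Aronszajn-style realization on formal sums modulo the null space, and $2\Rightarrow 1$ by the factorization $\exp(-\lambda\|u-v\|^2)=f(u)\exp(2\lambda\langle u,v\rangle)f(v)$ combined with Proposition~\ref{cvexcone}(i)--(iv). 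I checked the bookkeeping identity $\sum_{i,j\ge 1}c_ic_j\kappa(x_i,x_j)=-\tfrac12\sum_{i,j\ge 0}c_ic_jd(x_i,x_j)^2$ (using $d(x_0,x_0)=0$ and $c_0=-\sum_{i\ge1}c_i$) and the computation $\|\phi(x)-\phi(y)\|^2=\kappa(x,x)-2\kappa(x,y)+\kappa(y,y)=d(x,y)^2$; both are right. Your proof has the additional virtue of reusing the paper's own Proposition~\ref{cvexcone} for the $2\Rightarrow1$ direction, making the whole argument self-contained relative to the paper's toolkit. No gaps.
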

Note that, if it exists, the isometric embedding $X\hookrightarrow\mathcal{V}$ is not in general related to the reproducing kernel Hilbert space (RKHS) map for the Gaussian kernel. Given a positive definite kernel $k$ on $X$, the RKHS map is a set-theoretic map $K: X\rightarrow \mathcal{H}$ where $\mathcal{H}$ is a Hilbert space such that $k(\dash,\dash)=\langle K(\dash),K(\dash)\rangle$. These are different objects. \par
Theorem \ref{mainthm} is already very powerful, and guarantees that $\Lplus(X)=(0,\infty)$ for many spaces.
\begin{corollary}\label{consequences}
$\Lplus(X)=(0,\infty)$ for the following spaces $X$:
\begin{enumerate}
    \item $\R^n$ with the Euclidean metric, for $n\geq1$.
    \item $L^2_{\R} (\Omega,\mu)$ for any measure space $(\Omega,\mu)$.
    \item $\text{Sym}^{++}(n)$ the space of symmetric $n\times n$ positive definite matrices, with the Frobenius metric, for $n\geq 1$.
    \item $\text{Sym}^{++}(n)$ with the log-Euclidean metric $d(A,B)=\|\log(A)-\log(B)\|_F$, for $n\geq 1$, where $\|\cdot\|_F$ denotes the Frobenius norm.
    \item $Gr_{\R} (k,n)$ the real Grassmanian with the projection metric $d([A],[B])=\|AA^T-BB^T\|_F$, where $A,B$ are the $n\times k$ matrices representing the subspaces $[A],[B]$ respectively, for $1\leq k<n$.
\end{enumerate}
\end{corollary}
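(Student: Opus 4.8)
The plan is to reduce every item to Schoenberg's Theorem~\ref{mainthm}: by that theorem it suffices, in each case, to exhibit an isometric embedding (in the sense of Definition~\ref{embedding}) of the given metric space into some inner product space $\mathcal{V}$, and then conclude $\Lplus(X)=(0,\infty)$.

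Items~(1) and~(2) are immediate, since $\R^n$ with the Euclidean metric and $L^2_\R(\Omega,\mu)$ with its usual metric are themselves inner product spaces, so the identity map serves as the embedding. For item~(3) I would take $\mathcal{V}=\text{Sym}(n)$ equipped with the Frobenius inner product $\langle A,B\rangle=\text{tr}(A^TB)$; since $\text{Sym}^{++}(n)\subset\text{Sym}(n)$ and the Frobenius metric on $\text{Sym}^{++}(n)$ is by definition the restriction of the metric induced by this inner product, the inclusion is an isometric embedding. For item~(4) the target is again $\mathcal{V}=\text{Sym}(n)$, but the embedding is the matrix logarithm $\iota(A)=\log A$, which is a bijection of $\text{Sym}^{++}(n)$ onto $\text{Sym}(n)$; the identity $d(A,B)=\|\log A-\log B\|_F=\|\iota(A)-\iota(B)\|_F$ is precisely the assertion that $\iota$ is isometric.

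For item~(5) I would once more take $\mathcal{V}=\text{Sym}(n)$ with the Frobenius inner product and define $\iota([A])=AA^T$, choosing the representative $A$ to have orthonormal columns, so that $AA^T$ is the orthogonal projector onto $[A]$. The one step requiring a (small) argument is that $\iota$ is well defined and injective: two orthonormal bases of the same subspace differ by right multiplication by an orthogonal $k\times k$ matrix $Q$, and $(AQ)(AQ)^T=AQQ^TA^T=AA^T$, so $\iota$ does not depend on the chosen representative; and a subspace is recovered from its projector as its range (equivalently, its $1$-eigenspace), so $\iota$ is injective. Granting this, $d([A],[B])=\|AA^T-BB^T\|_F=\|\iota([A])-\iota([B])\|_F$ is exactly the Frobenius distance, so $\iota$ is an isometric embedding.

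There is no serious obstacle here: the whole force of the corollary sits inside Theorem~\ref{mainthm}, and the only genuine verification is the well-definedness and injectivity of the Grassmannian map in item~(5). It is worth emphasising that items~(3)--(5) depend on the particular metrics listed; other natural metrics on these spaces --- for instance the affine-invariant metric on $\text{Sym}^{++}(n)$, or the geodesic metric on $Gr_{\R}(k,n)$ --- are \emph{not} covered by this argument, consistently with the main result of the paper, since those spaces contain isometrically embedded circles.
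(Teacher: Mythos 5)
Your proposal is correct and takes exactly the route the paper intends: the paper's own proof is the one-line remark that the corollary ``follows directly from Theorem~\ref{mainthm}'', and you have simply supplied the isometric embeddings into inner product spaces that make that reduction explicit. The only step needing any argument, the well-definedness of $[A]\mapsto AA^T$ for item~(5), is handled correctly.
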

\begin{proof}
Follows directly from Theorem \ref{mainthm}.
\end{proof}
Moreover, \cite{feragen_geodesic_2014} and \cite{jayasumana_kernel_2015} deduce from Theorem \ref{mainthm} the following result, which we state without proof.
\begin{theorem}\label{mfdthm}
If $X$ is a complete Riemannian manifold, $\Lplus(X) = (0,\infty)$ if and only if $X$ is isometric to a Euclidean space.
\end{theorem}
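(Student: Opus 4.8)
The plan is to derive the theorem from Schoenberg's Theorem~\ref{mainthm}. One direction is immediate: if $X$ is isometric to a Euclidean space $\R^n$, then in particular $X\hookrightarrow\R^n$ in the metric sense of Definition~\ref{embedding}, so $\Lplus(X)=(0,\infty)$ by Theorem~\ref{mainthm} (or by Corollary~\ref{consequences}(1) together with Proposition~\ref{subsetsG}). All the content is in the converse, so assume $\Lplus(X)=(0,\infty)$; by Theorem~\ref{mainthm} there is a distance-preserving injection $\iota\colon X\to\mathcal V$ into an inner product space. The goal is to show $\iota$ is an isometry of $X$ onto an affine subspace of $\mathcal V$, and then to appeal to Myers--Steenrod to make the resulting metric-space isometry a Riemannian one. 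Let $n=\dim X$; recall that, since $X$ is complete, Hopf--Rinow applies: geodesics extend to all of $\R$, any two points are joined by a minimizing geodesic, and each $\exp_p$ is onto.

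The first step is to show $\iota$ sends geodesics to straight lines. If $\gamma\colon[0,\ell]\to X$ is a minimizing geodesic of unit speed from $p$ to $q$, then $d(p,\gamma(t))+d(\gamma(t),q)=\ell=d(p,q)$ for all $t$, so equality holds in the triangle inequality for $\iota(p),\iota(\gamma(t)),\iota(q)$; since an inner product space is strictly convex (equality in the triangle inequality forces equality in Cauchy--Schwarz), $\iota(\gamma(t))$ lies on the segment $[\iota(p),\iota(q)]$, and comparing distances forces $\iota(\gamma(t))=\iota(p)+(t/\ell)(\iota(q)-\iota(p))$. An arbitrary geodesic is locally minimizing (existence of totally normal neighbourhoods), so any compact piece of it decomposes into finitely many overlapping minimizing sub-arcs; each maps to a segment, consecutive images overlap in a segment of positive length and hence lie on a common line, and chaining shows every geodesic $\gamma\colon\R\to X$ is carried by $\iota$ isometrically onto an affine line, so that $\|\iota(\gamma(s))-\iota(\gamma(t))\|=|s-t|$ for all $s,t$.

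The second, crucial step recovers the Euclidean angles. Fix $p\in X$. By the previous step, for $v\in T_pX\setminus\{0\}$ we may write $\iota(\exp_p v)=\iota(p)+|v|\,u(v/|v|)$, where $u$ maps the unit sphere of $T_pX$ into the unit sphere of $\mathcal V$; expanding $\|\iota(\exp_p a)-\iota(\exp_p b)\|^2=d(\exp_p a,\exp_p b)^2$ yields $\langle u(a/|a|),u(b/|b|)\rangle=(|a|^2+|b|^2-d(\exp_p a,\exp_p b)^2)/(2|a||b|)$. Since $u$ depends only on directions, the left-hand side is unchanged under $a,b\mapsto\varepsilon a,\varepsilon b$, while on the right the second-order expansion of the Riemannian distance in normal coordinates at $p$ gives $d(\exp_p\varepsilon a,\exp_p\varepsilon b)^2=\varepsilon^2|a-b|^2+o(\varepsilon^2)$ as $\varepsilon\to 0$. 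Passing to the limit gives $\langle u(a/|a|),u(b/|b|)\rangle=\langle a,b\rangle/(|a||b|)$; equivalently, $\Phi\colon T_pX\to\mathcal V$ defined by $\Phi(v)=|v|\,u(v/|v|)$ for $v\neq0$ and $\Phi(0)=0$ preserves all inner products, hence is linear, hence a linear isometry onto an $n$-dimensional subspace. Thus $\iota\circ\exp_p=\iota(p)+\Phi$ is distance preserving; as $\iota$ is injective and $\exp_p$ is onto, $\exp_p$ itself is a bijective isometry from the Euclidean space $(T_pX,g_p)\cong\R^n$ onto $(X,d)$, and Myers--Steenrod upgrades it to a Riemannian isometry, completing the proof.

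I expect the main obstacle to be this second step. The first step only tells us that, in $\mathcal V$, angles are defined between entire geodesics issuing from a common point; it does not yet say these agree with the Riemannian angles, and the infinitesimal rescaling argument is precisely what forces the agreement, using that the distance function of $X$ agrees to second order at $p$ with the flat one. A secondary technical nuisance is the passage from minimizing geodesics to all geodesics in step one: one must check that the collinear segments obtained from the overlapping sub-arcs genuinely assemble into a single affine line (with a consistent arc-length parametrization), rather than into a polygonal path — this is where completeness and the local minimality of geodesics do their work.
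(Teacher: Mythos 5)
The paper states Theorem~\ref{mfdthm} \emph{without proof}, deferring entirely to \cite{feragen_geodesic_2014} and \cite{jayasumana_kernel_2015}, so there is no in-paper argument to compare against; your proposal supplies a proof where the paper gives none. Your argument is correct and complete, and it follows the same broad strategy as those references: reduce via Schoenberg's Theorem~\ref{mainthm} to showing that a complete Riemannian manifold admitting a distance-preserving map $\iota$ into an inner product space is globally isometric to $\R^n$. Both of your steps hold up. The strict-convexity (equality in Cauchy--Schwarz) argument does force minimizing geodesics onto arc-length-parametrized segments; the overlapping-subarc chaining is legitimate, since two arc-length affine parametrizations agreeing on an interval of positive length are restrictions of a single affine map, so each geodesic $\gamma\colon\R\to X$ satisfies $\iota(\gamma(t))=a+tw$ globally (which, incidentally, already shows every geodesic of $X$ is globally minimizing). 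The rescaling limit in the second step is the genuinely nontrivial point, as you say, and it is handled correctly: $\langle u(\hat a),u(\hat b)\rangle$ is independent of $\varepsilon$ while the expansion $d(\exp_p\varepsilon a,\exp_p\varepsilon b)^2=\varepsilon^2|a-b|^2+o(\varepsilon^2)$ identifies the limit, an inner-product-preserving map is automatically linear, and then $\iota\circ\exp_p=\iota(p)+\Phi$ makes $\exp_p$ a distance-preserving bijection from Euclidean $T_pX$ onto $X$ (injectivity from distance preservation, surjectivity from Hopf--Rinow), which Myers--Steenrod upgrades to a Riemannian isometry --- a step the paper itself implicitly endorses in its remark following Definition~\ref{embedding}. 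The forward direction via Corollary~\ref{consequences}(1) and Proposition~\ref{subsetsG} is likewise fine.
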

% Note that in Theorem \ref{mfdthm} there is no confusion as to whether we are referring to an isometry in the sense of metric spaces or of Riemannian manifolds by the remark after Definition \ref{embedding}. \par
While Theorem \ref{mainthm} is powerful, the full characterisation of $\Lplus(X)$ is far from solved. $\Lplus(X)$ can be non-empty and different from $(0,\infty)$; it is easy to construct finite metric spaces with more complicated $\Lplus(X)$. This can also be the case for more complex metric spaces: \cite[Theorem 3.10]{sra_positive_2013} shows that on the space of symmetric positive definite matrices $X=\text{Sym}^{++}(n)$ equipped with the metric of Stein divergence, we have
$$\Lplus(X) = \Big\{\frac{1}{2},\frac{2}{2},\dots,\frac{n-2}{2}\Big\}\cup\Big[\frac{n-1}{2},\infty\Big).$$
While this result gives hope that the Gaussian kernel may be PD for many parameters on many interesting spaces, we show that this is often not the case.
\section{The Gaussian kernel on the circle}
\begin{theorem}\label{newthm}
$\Lplus(S^1) = \varnothing$ where $S^1$ is the unit circle with its classical intrinsic metric.
\end{theorem}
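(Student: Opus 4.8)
The plan is to show that for every $\lambda>0$ one can find finitely many points on $S^1$ whose Gram matrix for the Gaussian kernel fails to be positive semidefinite. Parametrize the circle by arc length so that $d(\theta,\phi)$ is the geodesic distance on a circle of circumference $2\pi$ (so $d \le \pi$), and consider equally spaced points $\theta_j = 2\pi j/N$ for $j=0,\dots,N-1$. Then the Gram matrix is circulant, with entries depending only on $j-k \bmod N$, so its eigenvalues are given explicitly by the discrete Fourier transform of the first row. Positive definiteness of the Gaussian kernel on these $N$ points is therefore equivalent to all these Fourier coefficients being nonnegative, and it suffices to exhibit, for each $\lambda$, some $N$ and some frequency for which the corresponding sum is strictly negative.

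Concretely, I would study the sign of $\sum_{j=0}^{N-1} \exp\!\big(-\lambda\, d(0,\theta_j)^2\big)\,\omega^{j}$ for an appropriate root of unity $\omega = e^{2\pi i m/N}$, or more cleanly pass to the $N\to\infty$ limit and analyze the Fourier coefficients of the $2\pi$-periodic function $g(\theta) = \exp(-\lambda\, \delta(\theta)^2)$, where $\delta(\theta) = \min_{k\in\Z}|\theta - 2\pi k|$ is the distance-to-nearest-lattice-point (a triangular wave). By Proposition \ref{cvexcone} (or directly), positive definiteness of the kernel on all finite subsets of $S^1$ forces every Fourier coefficient $\hat g(m) = \frac{1}{2\pi}\int_{-\pi}^{\pi} e^{-\lambda\theta^2} e^{-im\theta}\,d\theta$ to be $\ge 0$; so it is enough to show that for each $\lambda>0$ there is an integer $m$ with $\hat g(m) < 0$. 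Here $\hat g(m)$ is, up to a positive constant, $\int_{-\pi}^{\pi} e^{-\lambda\theta^2}\cos(m\theta)\,d\theta$, which is the integral of a Gaussian against $\cos(m\theta)$ over a truncated interval.

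The main obstacle is precisely to certify that $\int_{-\pi}^{\pi} e^{-\lambda\theta^2}\cos(m\theta)\,d\theta < 0$ for some $m$ depending on $\lambda$. The full-line integral $\int_{-\infty}^{\infty} e^{-\lambda\theta^2}\cos(m\theta)\,d\theta = \sqrt{\pi/\lambda}\,e^{-m^2/(4\lambda)}$ is positive, so the negativity must come entirely from the truncation tails $\int_{|\theta|>\pi} e^{-\lambda\theta^2}\cos(m\theta)\,d\theta$; one must show these tails can dominate. For small $\lambda$ the Gaussian is nearly flat on $[-\pi,\pi]$, so $\int_{-\pi}^{\pi}\cos(m\theta)\,d\theta = \frac{2\sin(m\pi)}{m} = 0$ for integer $m\ge 1$, and a perturbation argument in $\lambda$ (e.g. differentiating in $\lambda$ at $\lambda=0$, which gives $-\int_{-\pi}^{\pi}\theta^2\cos(m\theta)\,d\theta = -\frac{(-1)^m\,2\pi}{m^2} + (\text{lower order})$, negative for even $m$) should make one coefficient strictly negative for all sufficiently small $\lambda$; then Corollary \ref{corcharG} (ii) would finish the job, since exhibiting emptiness near $0$ gives $\Lplus(S^1)=\varnothing$ outright. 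Carrying out the estimate uniformly — or, alternatively, handling each $\lambda$ by choosing $m$ large in terms of $\lambda$ and integrating by parts twice to extract the sign — is the technical heart of the argument; the rest (reduction to circulant Gram matrices, passage to Fourier coefficients, and the appeal to Corollary \ref{corcharG}) is routine.
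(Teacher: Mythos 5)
Your setup (equispaced points, circulant Gram matrix, eigenvalues as a discrete Fourier transform) matches the paper's, but your main logical route has a genuine gap: Corollary \ref{corcharG}(ii) does not do what you ask of it. That corollary says that if $\Lplus(S^1)\subset(0,\epsilon)$ — i.e.\ every sufficiently \emph{large} $\lambda$ fails — then $\Lplus(S^1)=\varnothing$; this is because $\Lplus$ is closed under addition (Proposition \ref{cvexconeG}(i)), so a single PD parameter would generate arbitrarily large ones. Your perturbation argument at $\lambda=0$ establishes the opposite containment: it shows every sufficiently \emph{small} $\lambda$ fails, i.e.\ $\Lplus(S^1)\subset[\epsilon,\infty)$. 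Closure under addition gives nothing from that — a set like $[1,\infty)$ is closed under addition and topologically closed, so "emptiness near $0$" is perfectly consistent with $\Lplus$ being nonempty. (There is no closure under halving: the pointwise square root of a PD kernel need not be PD.) This is precisely why the paper arranges its estimate to show negativity of an eigenvalue for all $\lambda>\tfrac{1}{2\pi^2}$, i.e.\ for large $\lambda$, before invoking Corollary \ref{corcharG}(ii).

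The good news is that your secondary suggestion repairs this and yields a correct proof that is genuinely different from (and arguably cleaner than) the paper's. For each fixed $\lambda>0$, the relevant quantity is $F_m(\lambda)=\int_{-\pi}^{\pi}e^{-\lambda\theta^2}\cos(m\theta)\,d\theta$, which is (up to a positive factor) the limit as $N\to\infty$ of the $m$-th circulant eigenvalue at fixed frequency $m$; two integrations by parts give $F_m(\lambda)=-(-1)^m\,4\pi\lambda e^{-\lambda\pi^2}m^{-2}+o(m^{-2})$, coming from the corner of the periodized Gaussian at $\theta=\pi$, so $F_m(\lambda)<0$ for all sufficiently large even $m$. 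This handles \emph{every} $\lambda$ directly, with no appeal to Corollary \ref{corcharG} at all (one must also justify the passage from negative Fourier coefficients to a negative eigenvalue of a finite Gram matrix, but that is routine since the rescaled circulant eigenvalues are Riemann sums converging to $F_m(\lambda)$). By contrast, the paper keeps $N$ finite and evaluates the eigenvalue at the top frequency $j=N/2$, which turns the sum into an alternating partial theta function and requires two nontrivial imported facts (Lemmas \ref{f1} and \ref{f2}) plus a second-order asymptotic expansion in $1/N$. Your fixed-frequency route trades that machinery for classical Fourier analysis and is essentially the approach behind the general characterizations of PD functions on the circle cited in the introduction; but as written, the proposal's primary argument does not establish the theorem.
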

\begin{proof}
Let $N\in\N$. Define $x_k = 2\pi k/N$ for $0\leq k\leq N-1$. So $$d(x_k,x_l) = \frac{2\pi}{N}\min\{|k-l+mN|: m\in\Z\}$$
for all $0\leq k,l\leq N-1$.
\begin{figure}
\centerline{\includegraphics[scale=.1]{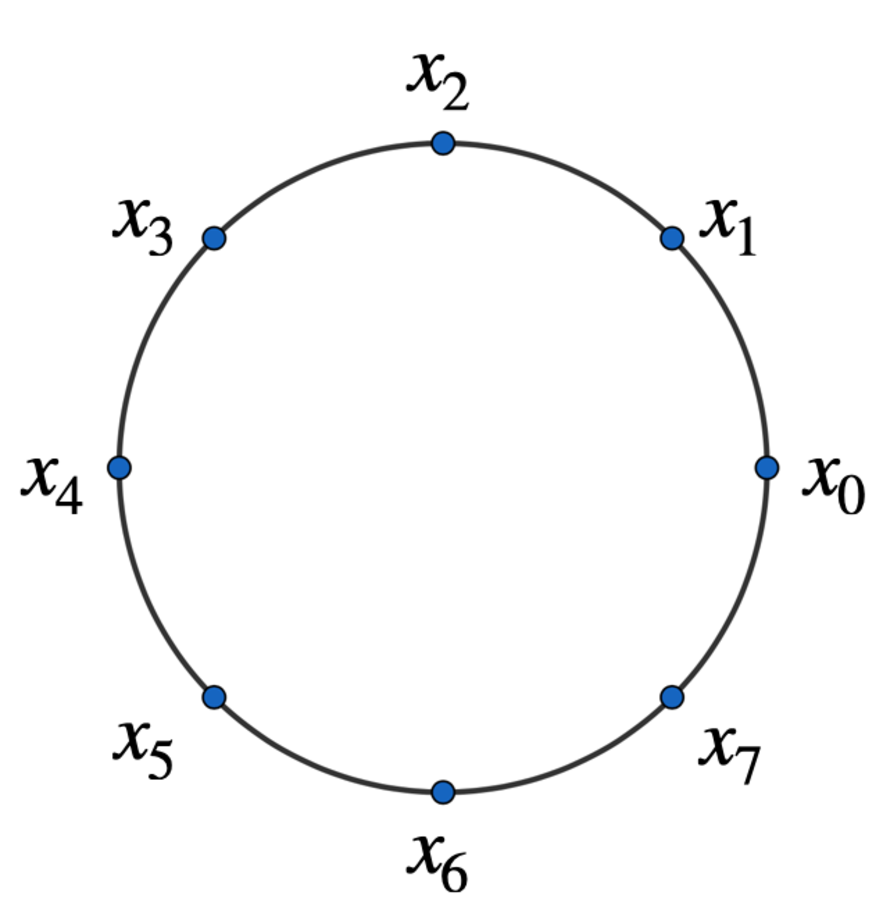}}
\caption{Points $x_k$ on $S^1$ for $N=8$.}
\label{fig}
\end{figure}
\par
So the Gram matrix of $x_0,\dots,x_{N-1}$ is
\begin{equation*}
K = \begingroup
\setlength\arraycolsep{6pt}
\begin{bmatrix} 
    1 & \exp(-\lambda(\frac{2\pi}{N})^2) & \exp(-\lambda(\frac{4\pi}{N})^2) & \dots & \exp(-\lambda(\frac{2\pi}{N})^2) \\
    \\
    \exp(-\lambda(\frac{2\pi}{N})^2) & 1 & \dots  & & \exp(-\lambda(\frac{4\pi}{N})^2) \\
    \\
    \\
    \vdots & \vdots & \ddots & & \vdots \\
    \\
    \\
    \exp(-\lambda(\frac{2\pi}{N})^2) & \exp(-\lambda(\frac{4\pi}{N})^2) & \exp(-\lambda(\frac{6\pi}{N})^2) & \dots & 1
    \end{bmatrix}.
    \endgroup
    \end{equation*}
To show the Gaussian kernel with parameter $\lambda$ is not PD all we need to show is that we can choose $N$ such that $K$ has a negative eigenvalue. \par
$K$ is a circulant matrix, so its eigenvalues are given by the discrete Fourier transform of the first row. Explicitly, these eigenvalues are
$$w_j = \sum_{k=0}^{\lfloor N/2\rfloor}\exp\left(-\mu\frac{k^2}{N^2}\right)e^{i\frac{2\pi}{N}kj}+\sum_{k=\lfloor N/2\rfloor+1}^{N-1}\exp\left(-\mu\frac{(N-k)^2}{N^2}\right)e^{i\frac{2\pi}{N}kj}$$
for $0\leq j\leq N-1$, where $\mu=\lambda(2\pi)^2$. Taking $N\equiv 0 \;\text{mod}\;2$, this gives
\begin{equation*}
\begin{aligned}
w_j &= 1+\sum_{k=1}^{N/2-1}\exp\left(-\mu\frac{k^2}{N^2}\right)(e^{i\frac{2\pi}{N}kj}+e^{i\frac{2\pi}{N}(N-k)j})+\exp(-\mu/4)e^{i\pi j} \\
&= -1+2\sum_{k=0}^{N/2-1}\exp\left(-\mu\frac{k^2}{N^2}\right)\cos\left(\frac{2\pi}{N}kj\right)+(-1)^j\exp(-\mu/4).
\end{aligned}
\end{equation*}
Restricting further to $N\equiv 0 \;\text{mod}\;4$ and $j=N/2$, the sum conveniently becomes alternating:
\begin{equation}\label{evalue}
w_{N/2} = -1+2\underbrace{\sum_{k=0}^{N/2-1}(-1)^k\exp\left(-\mu\frac{k^2}{N^2}\right)}_{(*)}+\exp(-\mu/4).
\end{equation}
We will show that $w_{N/2}$ is negative for $N$ large enough. For this, we need to estimate the second term of (\ref{evalue}). The difficulty lies in the fact that the variable $N$ appears in both the terms and the indices of the sum. To remedy this we define
$$S_r(N) := \sum_{k=0}^{\infty}(-1)^k\exp\left(-\mu\frac{k^2}{N^2}-r\frac{k}{N}\right)$$
for $r\geq0$. These series are instances of partial theta functions, and below we leverage two facts about them from the literature. But first, let us express $w_{N/2}$ in terms of these. We have
\begin{equation}\label{series}
\begin{aligned}
S_0(N) &= \sum_{k=0}^{\infty}(-1)^k\exp\left(-\mu\frac{k^2}{N^2}\right) \\
=\underbrace{\sum_{k=0}^{N/2-1}(-1)^k\exp\left(-\mu\frac{k^2}{N^2}\right)}_{(*)}&+\exp(-\mu/4)-\exp(-\mu/4)\exp\left(-\mu\frac{1}{N^2}-\mu\frac{1}{N}\right) \\
&+\underbrace{\sum_{k=N/2+2}^{\infty}(-1)^k\exp\left(-\mu\frac{k^2}{N^2}\right)}_{(**)}.
\end{aligned}
\end{equation}
We remove the dependency on $N$ from the indices of $(**)$:
\begin{equation}\label{lastterm}
\begin{aligned}[b]
\sum_{k=N/2+2}^{\infty}(-1)^k&\exp\left(-\mu\frac{k^2}{N^2}\right)=\sum_{k=0}^{\infty}(-1)^{k+N/2}\exp\left(-\mu\frac{(k+N/2+2)^2}{N^2}\right)\\
&= \exp\left(-\mu\frac{(N/2+2)^2}{N^2}\right)\sum_{k=0}^{\infty}(-1)^k\exp\left(-\mu\frac{k^2}{N^2}-\mu\left(1+\frac{4}{N}\right)\frac{k}{N}\right) \\
&= \exp(-\mu/4)\exp\left(-\mu\frac{4}{N^2}-\mu\frac{2}{N}\right)S_{\mu(1+4/N)}(N).
\end{aligned}
\end{equation}
Substituting (\ref{lastterm}) into (\ref{series}), and in turn substituting (\ref{series}) into (\ref{evalue}) we get
\begin{equation*}
\begin{aligned}
w_{N/2}= -1+2S_0(N)+\exp(-\mu/4)\bigg(&-1+2\exp\left(-\mu\frac{1}{N^2}-\mu\frac{1}{N} \right) \\
&-2\exp\left(-\mu\frac{4}{N^2}-\mu\frac{2}{N}\right)S_{\mu(1+4/N)}(N)\bigg).
\end{aligned}
\end{equation*}
Now we use the following lemma.
\begin{lemma}\label{f1}
$$S_r(N)\geq S_0(N)$$
for all $r\geq 0$ and for all $N$.
\end{lemma}
\begin{prooflem}
This follows from \cite[Proposition 14 Equation 5.8]{carneiro_bandlimited_2013}.
\end{prooflem}
\\
\par
So
\begin{equation}\label{estimate}
\begin{aligned}
w_{N/2}\leq -1+2S_0(N)+\exp(-\mu/4)\bigg(&-1+2\exp\left(-\mu\frac{1}{N^2}-\mu\frac{1}{N} \right)\\
&-2\exp\left(-\mu\frac{4}{N^2}-\mu\frac{2}{N}\right)S_0(N)\bigg).
\end{aligned}
\end{equation}
The limit of the RHS of (\ref{estimate}) as $N\rightarrow\infty$ is 0, so it is not enough just to take the limit. Instead, we will need to take an asymptotic expansion with respect to $1/N$ to the second order. For this, we need a second lemma.
\begin{lemma}\label{f2}
$$S_0(N) = \frac{1}{2}+O(1/N^n)\text{ as } N\rightarrow\infty$$
for all $n\geq 1.$
\end{lemma}
\begin{prooflem}
\cite[Theorem 1.1 (i)]{bringmann_asymptotic_2017} says that for $n\geq1$,
$$S_0(N) = \sum_{a=0}^n \bigg(\frac{1}{2\pi i}\frac{\partial}{\partial z}\bigg)^{2a}\bigg[\frac{1}{1-e^{2\pi i z}}\bigg]_{z=1/2}\frac{(-\mu)^a}{a!}\frac{1}{N^{2a}}+O(1/N^{2n+1})\text{ as } N\rightarrow\infty.$$
Now observe
$$f(z)   := \frac{1}{1-e^{2\pi i(z-1/2)}}-\frac{1}{2}=\frac{i}{2}\tan(\pi z)$$
is odd, so the even terms in the Taylor series of $f$ vanish, and hence
\begin{equation*}
\bigg(\frac{1}{2\pi i}\frac{\partial}{\partial z}\bigg)^{2a}\bigg[\frac{1}{1-e^{2\pi i z}}\bigg]_{z=1/2}=\begin{cases}
    \frac{1}{2} & \text{if $a=0$}\\
    0 & \text{if $a\geq 1$}
    \end{cases}
\end{equation*}
which gives us the fact.
\end{prooflem}
\\
\par
So taking the asymptotic expansion with respect to $1/N$ to second order, (\ref{estimate}) simplifies to
$$w_{N/2}\leq \exp(-\mu/4)\frac{2\mu-\mu^2}{N^2}+O(1/N^3) \text{ as } N\rightarrow\infty.$$
If $\lambda>\frac{1}{2\pi^2}$ then $\mu > 2$ so $2\mu-\mu^2<0$ and hence $w_{N/2}$ is negative for $N$ large enough, with $N\equiv 0 \;\text{mod}\;4$. It is possible to improve these inequalities to obtain the result for all $\lambda$, although this is unnecessary: Corollary \ref{corcharG} (ii) is enough to conclude the proof.
\end{proof}

Thanks to Proposition \ref{subsetsG}, Theorem \ref{newthm} gives us much more than one may suspect at first.
\begin{corollary} If $S^1\hookrightarrow X$ then $\Lplus(X) = \varnothing$. So $\Lplus(X) = \varnothing$ for the following spaces, equipped with their classical intrinsic metric:
\begin{enumerate}
    \item $S^n$ the sphere, for $n\geq 1$.
    \item $\R\Proj^n$ the real projective space, for $n\geq 1$.
    \item $Gr_{\R} (k,n)$ the real Grassmannian, for $1\leq k < n$.
\end{enumerate}
\end{corollary}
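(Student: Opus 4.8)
The plan is to dispatch the two halves of the statement separately. The first half is immediate from what is already in place: if $S^1\hookrightarrow X$ then $\Lplus(X)\subseteq\Lplus(S^1)$ by Proposition~\ref{subsetsG}, and $\Lplus(S^1)=\varnothing$ by Theorem~\ref{newthm}, so $\Lplus(X)=\varnothing$. For the projective-space and Grassmannian examples this is not quite enough, since the unit circle $S^1$ has diameter $\pi$ whereas $\R\Proj^n$ has diameter $\tfrac{\pi}{2}$, so $S^1$ admits no isometric embedding into it; accordingly I would first record a mild strengthening of Theorem~\ref{newthm}. Its proof uses only that $\mu=\lambda(2\pi)^2$ runs over all of $(0,\infty)$, so the same computation, now with $\mu=\lambda L^2$, applies verbatim to a circle of any circumference $L>0$; equivalently, rescaling the parameter shows $\Lplus(C_r)=\varnothing$ for the circle $C_r$ of every radius $r>0$. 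It then suffices, for each space $X$ on the list, to exhibit an isometric embedding of \emph{some} circle $C_r$ into $X$.

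For $S^n$ with $n\ge1$ I would intersect $S^n$ with a two-dimensional linear subspace of $\R^{n+1}$ to get a great circle $C$: the great circle through any two points of $C$ is $C$ itself, so the $S^n$-distance between them is the length of the shorter arc of $C$, i.e.\ $S^n$ induces on $C$ exactly its arc-length metric and $C$ is an isometrically embedded copy of $S^1$, whence $\Lplus(S^n)=\varnothing$ by the first half. For $\R\Proj^n$ with $n\ge1$ (this is the case $Gr_\R(1,n+1)$ treated below, but is cleaner done directly) I would take the antipodal quotient of such a $C$, a projective line $\R\Proj^1\subset\R\Proj^n$; since the minimizing geodesics of $\R\Proj^n$ are images of great-circle arcs of $S^n$, the metric induced on $\R\Proj^1$ is its intrinsic metric, namely that of a circle of radius $\tfrac12$, and the strengthened Theorem~\ref{newthm} gives $\Lplus(\R\Proj^n)=\varnothing$.

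For $Gr_\R(k,n)$ with $1\le k<n$, equipped with its classical (Riemannian geodesic) metric $d([A],[B])=\big(\sum_i\theta_i^2\big)^{1/2}$ in the principal angles $\theta_i$, I would fix a $(k-1)$-dimensional subspace $W$ with orthogonal complement $V=W^{\perp}$, of dimension $n-k+1\ge2$, and consider the submanifold $\{\,W\oplus\ell:\ell\subset V \text{ a line}\,\}\cong\Proj(V)\cong\R\Proj^{n-k}$. For two of its points the common summand $W$ forces $k-1$ of the principal angles to vanish and leaves only the angle between the two lines of $V$, so the Grassmannian distance reduces to the $\R\Proj^{n-k}$-distance; hence $\R\Proj^{n-k}$ embeds isometrically, and since $n-k\ge1$ it contains an isometrically embedded $\R\Proj^1$. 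The strengthened Theorem~\ref{newthm} then gives $\Lplus(Gr_\R(k,n))=\varnothing$, which finishes the argument.

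The one substantive point, used in all three cases, is the claim that restricting to these submanifolds creates no metric shortcuts, so that the induced distance really is the abstract circle distance; this is where I would take care. In each case it reduces to a standard fact — minimizing geodesics of $S^n$ between two points of a great circle lie on that great circle, geodesics of $\R\Proj^n$ are projected great-circle arcs, and the geodesic distance on a real Grassmannian is $\big(\sum_i\theta_i^2\big)^{1/2}$ in the principal angles — so the ``obstacle'' here is really a matter of recording these carefully rather than any genuine difficulty.
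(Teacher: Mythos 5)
Your proposal is correct and follows essentially the same route as the paper's own proof sketch: Proposition~\ref{subsetsG} plus Theorem~\ref{newthm} for the general claim, a great circle for $S^n$, the projective line $\R\Proj^1\cong\tfrac12\cdot S^1$ for $\R\Proj^n$, and for the Grassmannian a circle obtained by keeping all but one principal angle zero (your ``fix $W$, rotate a line in $W^\perp$'' construction is exactly this). Your explicit remark that the proof of Theorem~\ref{newthm} survives rescaling because $\mu=\lambda L^2$ still ranges over $(0,\infty)$ is just a more careful spelling-out of the paper's one-line observation that the factor $1/2$ ``does not affect the conclusion.''
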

\begin{proofsketch}
This follows from Theorem \ref{newthm} and Proposition \ref{subsetsG}. Now we briefly argue for the specific examples. For 1., $S^1\hookrightarrow S^n$ (e.g., a ‘great circle’). For 2., $1/2\cdot S^1\cong \R\Proj^1\hookrightarrow \R\Proj^n$ where `$\cong$' means isometric and `$1/2\cdot S^1$' means $S^1$ rescaled by a factor of $1/2$. This factor does not affect the conclusion. For 3., the metric in question is
$$d([A],[B]) = \bigg(\sum_{i=1}^k\theta_i^2\bigg)^{1/2}$$
where $\theta_i$ is the $i$-th principal angle between $[A]$ and $[B]$ (see \cite{ye_schubert_2016} and \cite{zhu_angles_2013}), $[A],[B]\in Gr_{\R}(k,n)$. Fixing any $[A]\in Gr_{\R}(k,n)$, travelling on $Gr_{\R}(k,n)$ while keeping $\theta_i=0$ for $i>1$ and varying $\theta_1$ only, we get an isometric embedding of $1/2\cdot S^1$ into $Gr_{\R}(k,n)$.
\end{proofsketch}
\\
\par
It is conceivable that $S^1$ can be isometrically embedded (in the metric sense from Definition \ref{embedding}) into any compact Riemannian manifold (up to rescaling). We have yet to think of a counterexample. If this is true, then Theorem \ref{newthm} would solve the problem of characterising $\Lplus(X)$ for all compact Riemannian manifolds. However, while the Lyusternik–Fet theorem tells us that any compact Riemannian manifold has a closed geodesic, it appears to be an open question whether any such manifold admits an isometric embedding of $S^1$.
\begin{figure}
\centerline{\includegraphics[width=\textwidth]{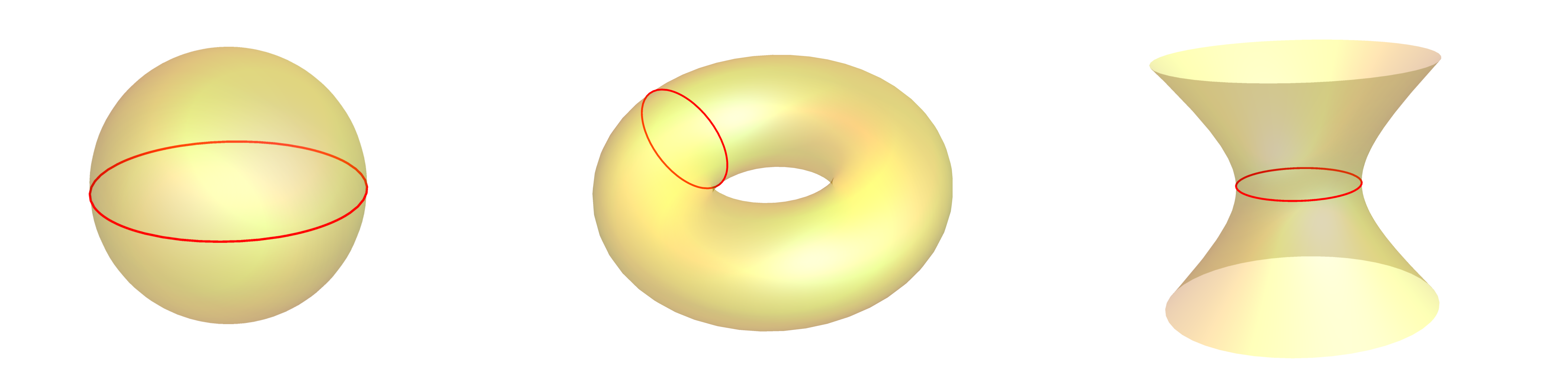}}
\caption{Examples of Riemannian manifolds that admit isometric embeddings of $S^1$. From left to right: a sphere, a torus, a hyperbolic hyperboloid.}
\label{examples}
\end{figure}
\par
Note that non-compact manifolds may also admit isometric embeddings of $S^1$: consider a hyperbolic hyperboloid.
There is precisely one (scaled) isometric embedding of $S^1$ into it. This example is particularly interesting since, as opposed to the examples above with positive curvature, it has everywhere negative curvature. See Figure \ref{examples}.

\section{Discussion}
In machine learning, most kernel methods rely on the existence of an RKHS embedding. This, in turn, requires the chosen kernel to be positive definite. Theorem \ref{newthm} shows that the Gaussian kernel defined in this work cannot provide such RKHS embeddings of the circle, spheres, and Grassmannians. It reinforces the conclusion from Theorem \ref{mfdthm} that one should be careful when using the Gaussian kernel in the sense defined in this work on non-Euclidean Riemannian manifolds. The authors in \cite{borovitskiy_matern_2022} propose a different way to generalise the Gaussian kernel from Euclidean spaces to Riemannian manifolds by viewing it as a solution to the heat equation. This produces positive definite kernels by construction. \par
Nevertheless, perhaps we should not be so fast to altogether reject our version of the Gaussian kernel, which has the advantage of being of particularly simple form. It is worth noting that the proof of Theorem \ref{newthm} relies on taking $N\to\infty$, where $N$ is the number of data points. \cite{feragen_open_nodate} lists three open problems regarding the positive definiteness of the Gaussian kernels on metric spaces. It suggests that we should not only look at whether the Gaussian kernel is PD on the whole space but whether there are conditions on the spread of the data such that the Gram matrix of this data is PD. Our proof of Theorem \ref{newthm} relying on the assumption of infinite data suggests that this may be the case. In general, fixing $N$ data points, the Gram matrix with components $\exp(-\lambda d(\dash,\dash)^2)$ tends to the identity as $\lambda\to\infty$, so will be PD for $\lambda$ large enough. This observation has supported the use of the Gaussian kernel on non-Euclidean spaces, for example, in \cite{jaquier_bayesian_2019} where it is used on spheres.
However, it is important to keep Theorem \ref{newthm} in mind in applications where the data is not fixed, and we need to be able to deal with new and incoming data, which is often the case.
\bibliographystyle{splncs04}
\bibliography{bibliography}

\end{document}